\def\w{{\bf w}}
\def\x{{\bf x}}
\def\u{{\bf u}}
\def\v{{\bf v}}
\def\y{{\bf y}}
\def\z{{\bf z}}
\def\e{{\bf e}}
\def\1{{\bf 1}}
\def\0{{\bf 0}}
\def\ie{{i.e.}}
\def\Ie{{I.e.}}
\def\RR{{\mathbb R}}
\def\l{{\ell}}
\newtheorem{theorem}{Theorem}[section]
\newtheorem{proposition}[theorem]{Proposition}
\newtheorem{lemma}[theorem]{Lemma}
\newtheorem{remark}[theorem]{Remark}
\begin{document}
\title{ On a realization of motion and similarity group equivalence classes of labeled points in $\RR^k$ with applications to computer vision. 
}
\author{Steven B. Damelin \thanks{Department of Mathematics, University of Michigan, 530 Church Street, Ann Arbor, MI, USA.\, damelin@umich.edu.}, 
David L. Ragozin \thanks{Department of Mathematics, University of Washington, Seattle, WA 98195, USA,\, rag@uw.edu} and 
Michael Werman \thanks{ Department of Computer Science, The Hebrew University, 91904, Jerusalem, Israel.\, michael.werman@mail.huji.ac.il}}
\date{24  March, 2021.}
\maketitle

\begin{abstract}
We study a realization of motion and similarity group equivalence classes of $n\geq 1$ labeled points in $\RR^k,\, k\geq 1$  as a metric space with a computable metric. Our study is motivated by applications in computer vision.
\end{abstract}

Keywords: Orthogonal group, Similarity group,  Analysis on manifolds,  Data science,  Optimization, Control, Vision, Manifold learning, Motion group.

\section{Introduction.}
\setcounter{equation}{0}

\subsection{Visual-objects and Vision-groups.}

We will work in Euclidean space $\mathbb R^k$ for some fixed $k\geq 1$. 
\medskip

Visual-object recognition is the ability to perceive properties (such as shape, color and texture) of a "visual- object" in $\mathbb R^{k}$. Regardless of the object's position or illumination, the ability to effectively identify the object, makes it a visual-object. 

One significant aspect of visual-object recognition is the ability to recognize a visual-object across varying viewing conditions. These varying conditions include object orientation, lighting, object variability for example size, color and other within-category differences. Visual-object recognition includes viewpoint-invariant, viewpoint-dependent and multiple view theories.
 With this in mind, imagine we are given two visual-objects $O_{b}$ and $O_{b}'$ in $\mathbb R^k$. 
We think of $O_{b}$ and $O_{b}'$ as visually-equivalent if  there exists a well defined group action $g_{v}:O_{b}\to O_{b'},\, g_{v}\in G_{v}$ with $G_{v}$ a "vision-group".
\medskip

Some examples of actions $g_{v}$: \footnote{Let $f:\mathbb R^{k}\to \mathbb R^{k}$ be a map and suppose ${\rm det}(f')(x)$ exists on all of $\mathbb R^k$. 
Then $f$ is proper(orientation preserving) or respectively  improper if ${\rm det}(f')(x)>0,\, x\in \mathbb R^k$ or respectively ${\rm det}(f')(x)<0$, $x\in \mathbb R^k$.}

\begin{itemize}
\item[(a)] Affine maps: The map $A:\mathbb R^{k}\to \mathbb R^{k}$ is an affine map if there exists a linear transformation $M:\mathbb R^k\to \mathbb R^k$ and $\vec{t}\in \mathbb R^k$ so that for every $x\in \mathbb R^k$, $A(x)=Mx+\vec{t}$. Affine maps preserve area (volume) ratios. If $M$ is invertible (i.e., $A$ is then invertible affine), then $A$ is either proper or improper. If $M$ is not invertible, the map $A$ is neither proper or improper.
\item[(b)] Euclidean motions:  An affine map $A$ is an improper Euclidean motion  if $M\in O(k)$ and a proper  Euclidean motion if $M\in SO(k)$.  Euclidean motions can only be proper or improper. Here, $O(k)$ and $SO(k)$ are respectively the orthogonal and special orthogonal groups.
\item[(c)] Reflections:  A reflection $A:\mathbb R^{k}\to \mathbb R^{k}$  with respect to a hyperplane in $\mathbb R^k$ is an improper Euclidean motion with $A(x)=x$ on all of the hyperplane. 
\item[(d)] Similarity maps: This is a Euclidean motion plus a scaling. Similarity maps preserve length ratios. 
\end{itemize}

Here and throughout:
\begin{itemize}
\item[(1)] $GL(k)$ is the group of invertible linear maps in $\mathbb R^k=$ the group of non singular (real) $k\times k$  matrices.
\item[(2)] The orthogonal group $O(k)$. This  is the group of orthogonal real $k\times k$ matrices, ie $\left\{ A\in GL(k):\, A^TA=I_k\right\}$. Here and throughout, $I_k$ is the $k\times k$ identity matrix. 
\item[(3)] The special orthogonal group $SO(k):= \left\{O \in GL(k): O^{T}O=I_k \textnormal{ and } {\rm det}(O)=1 \right\}$.
\item With $\ltimes$ denoting semi direct product.
\item[(4)] The full translation group on $\RR^k$ is $I_k \ltimes \RR^k$.  
\item[(5)] The motion group on $\RR^k$ is $O(k)\ltimes \RR^k$.
\item[(6)] The affine group on $\RR^k$ is $Aff(k):=GL(k)\ltimes\RR^k$.
\end{itemize}

We refer the reader to the following references which give a good perspective of Section (1.1)  in various ways.
\cite{D6, D7, DR11, H65, Ki, Ki34, Ki51, Ki1, Lii1, O, O1, O2, W1, W8, W3}.

\subsection{$n$-pointed images.}

Suppose we have $n\geq 1$ labeled points (i.e. column vectors) ${\y}_1, {\y}_2,\ldots,{\y}_n\in \RR^k$ for some fixed $n$.
When these are assembled as the first, second \ldots n'th rows of an $n \times k$
matrix $Y:=[{\y}_1, ...., {\y}_n]^T\in \RR^{n\times k}$, we shall refer to $Y$ as an $n$-pointed image. Notice that an image $Y$ depends on the order and cardinality of the points. The rows 
of $Y$,  which we denote as
${\bf y}_i^T:=[y_{i,1}, ...y_{i,k}], 1\leq i\leq n$, represent the $k$ space coordinates of the $n$ image points ${\bf y}_1,...,{\bf y}_n$, though by taking them as rows we are really passing to the dual space, ${\RR^k}^*$, so actually $Y \in \RR^n \bigotimes {\RR^k}^* := M(n,k)$

A single image point $\y \in \RR^k$ is to be thought of as the {\it orthogonal} projection $P\Tilde{\y}$ of some point $\Tilde{\y}$ in a compact visual-object in $\RR^k$ with ${\rm Range}(P) \subseteq \RR^k$. Here, $e_i,\, 1\leq i\leq k+1$ are the basis vectors for $\mathbb R^{k+1}$. Our record of this image is formed by a "camera" on its "film" or "sensor" represented by the $\RR^k \bigoplus 0\e_{k+1}$ hyperplane.

In our $n$-pointed labeled image, $Y$, we are taking dual space images of i-labelled  points, ($1\leq i\leq n$)  under the adjoint/transpose $P^* = P^T = P$ where the last equality is due to the fact that $P$ is an orthogonal hence self-adjoint transformation/matrix. Moreover, since $P$ is a projection, $P^2 = P$. 

Thus, we have that the $n$ labeled points have been projected via $P$ to a $k$- dimensional affine subspace say $V$ with the projections $P$ of the visual-object's labeled points forming our $n$-pointed image. Note that $P(\RR^k)= V\textnormal{ and } P \circ P = P$. 

\subsection{Group invariance.}

The relevant geometric properties of our visual-objects and our images are assumed to be invariant under the action of a compact group, $G$ with the following minimal assumptions: 

\begin{equation}
SO(k)
\subseteq G \subseteq  GL(k)\ltimes \RR^k.
\label{e:Gprop}
\end{equation}

 Among others, this collection includes the group $O(k)$  and the proper similarity groups. 

\subsection{Goal.}

We refer the reader to Section (7.3), an appendix, which is needed moving forward.

Since ${\rm Range}(P)=V$, $P^2=P$ and $P=I_k$ on $V$, the projection $P$ commutes with $G$ on $V$.  Since $G$ effectively changes the coordinates 
in the camera/film or camera/sensor unit but has no effect on the coordinates of $\y$, the new film image coordinates will be given by $g^{-1}\y$, $g\in G$.
We extend this (rowwise) action of $G$ on each $\y_i^T$ to get an action of $G$ on each $Y \in M(n,k)$ given by
\[ \left( g \in\ G, Y \in M(n,k) \right) 
\mapsto
{Yg^{-1}*} .\]
When 
$ G \subseteq {GL(k)}$   
then this is just matrix multiplication.
When $G \subseteq 
Aff(k):=GL(k)\ltimes\RR^k$ then the action is given by 
\[
{g=\big((\tilde{g} \in GL(k), \u \in \RR^k), Y\in\RR^k \big)} \mapsto Y{\tilde{g}^{-1}}*-\1_n{{\tilde{g}^{-1}}*\u^T},
\]
\[
\textnormal{where }\1_n=[1,1,...,1]^T .
\] 

The goal of this paper is to understand and characterize the metric geometry of each orbit $GY$ and provide a computable metric realization of the space of all orbits.  That is 
$M(n,k)/G$.  Said another way, we want to analyze the space of $G$-orbits 
in $M(n,k)={\RR}^n \otimes {{\RR}^k}^*$ where for an affine subspace $V$, it's dual space $V^*$ can be identified as $V^T$ via the bilinear pairing $(\v,\u^T)\mapsto Tr(\v\u^T) \in \RR$.

Notice that above we have idealized the notion of film or sensor by giving it an infinite extent as a affine hyperplane $\RR^k \subset \RR^{k+1}$. Similarly there may be different idealized notions of exactly how our camera and lens forms an image as well as how it can move around in the space, $\RR^{k}$, containing our object.  
Here, though as we have already stated, we assume that these idealized notions are captured by a fixed group $G$.

We are ready to state our three main results.  As we discover, the geometry of each orbit $GY$ is based on an ellipsoid in $\mathbb R^k$ with  certain parameters which completely determine the geometry of the orbit for $GY$ for each $Y$. Here $<.>$ denotes inner product.

\section{Three main results.}
\setcounter{equation}{0}

Our first main result deals with the motion group case.

\begin{theorem}
Suppose we have $n$ labelled image points ${\y}_1, {\y}_2,...,{\y}_n\in \RR^k$. Suppose $n\geq k$.  Let the rows of the
matrix $Y:=[{\y}_1, ...., {\y}_n]^T\in \RR^{n\times k}$ which we will denote as ${\y}_i:=[y_{i,1}, ..., y_{i,k}]\in \RR^k, 1\leq i\leq n$, represent the $k$ space coordinates of the $n$ image points ${\y}_1,...,{\y}_n$.  Our goal is to realize the space of all equivalence classes of these $n$ pointed images as a metric space with a computable metric. We do this as follows:
Recall that $\1_n=[1,1,...,1]^T$. Now define $Y_{\rm {norm}}=(I-1/n 1_n1_n^T)Y$, $A_Y=Y_{\rm {norm}}Y_{\rm {norm}}^T$ (which is positive semi definite). Consider the map
\[
Y\mapsto E_Y=\left\{{\bf x}\in \mathbb R^k:\, <{\bf x},A_Y{\bf x}>=1\right\}.
\]  
\begin{itemize}
\item[(a)] This map from images to ellipsoids of dimension $k$ contained in $I_n^{\perp}$ maps ONTO the collection of all such ellipsoids.
\item[(b)] This map maps motion group equivalent images to the same ellipsoid.
\item[(c)] If two images map to the same ellipsoid, they are motion equivalent.
\end{itemize}
\label{t:main1}
\end{theorem}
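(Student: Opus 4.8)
Throughout I read $A_Y=Y_{\rm norm}Y_{\rm norm}^{T}$ as the $n\times n$ Gram matrix and $E_Y$ as the ellipsoid it cuts out inside the hyperplane $\1_n^{\perp}\subset\RR^{n}$; this is the form in which $A_Y$ is motion invariant, which is exactly what parts (b) and (c) require. The plan is to reduce all three claims to the linear algebra of this Gram matrix. First I would set $H:=I-\tfrac1n\1_n\1_n^{T}$ and note that it is the orthogonal projection of $\RR^{n}$ onto $\1_n^{\perp}$, so that $Y_{\rm norm}=HY$ and $A_Y\1_n=0$ (hence $E_Y$ genuinely lives in $\1_n^{\perp}$). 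A motion sends each point $\y_i\mapsto M\y_i+\t$ with $M\in O(k)$, i.e. it acts on the image by $Y\mapsto YM^{T}+\1_n\t^{T}$. Applying $H$ on the left annihilates the translation because $H\1_n=0$, giving $Y_{\rm norm}\mapsto Y_{\rm norm}M^{T}$; therefore $A_Y\mapsto Y_{\rm norm}M^{T}MY_{\rm norm}^{T}=Y_{\rm norm}Y_{\rm norm}^{T}=A_Y$, using $M^{T}M=I_k$. Since $A_Y$ is unchanged, so is $E_Y$, which is precisely part (b).

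For part (c) I would first argue that the set $E_Y$ determines the matrix $A_Y$: restricted to its support $\1_n^{\perp}$ the form $\langle x,A_Yx\rangle$ is recovered from the ellipsoid through its principal axes and semi-axis lengths (the unbounded directions recovering $\ker A_Y$), and $A_Y$ vanishes on $\1_n$; hence $E_Y=E_{Y'}$ forces $Y_{\rm norm}Y_{\rm norm}^{T}=Y'_{\rm norm}{Y'_{\rm norm}}^{T}$. The heart of the matter is then the classical Gram rigidity statement: if $U,V\in\RR^{n\times k}$ satisfy $UU^{T}=VV^{T}$ then $V=UQ$ for some $Q\in O(k)$. I would prove this by taking singular value decompositions $U=P\Sigma R_1^{T}$ and $V=P\Sigma R_2^{T}$ built from a common orthonormal eigenbasis $P$ of $A:=UU^{T}=VV^{T}$ and the common singular values in $\Sigma$; then $Q:=R_1R_2^{T}\in O(k)$ satisfies $UQ=P\Sigma R_1^{T}R_1R_2^{T}=P\Sigma R_2^{T}=V$. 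Applying this with $U=Y_{\rm norm}$ and $V=Y'_{\rm norm}$ produces $Y'_{\rm norm}=Y_{\rm norm}Q$, so $M:=Q^{T}\in O(k)$ together with a suitable translation is a motion carrying $Y$ to $Y'$; this is part (c).

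Part (a) I would handle by running the construction backwards. A $k$-dimensional ellipsoid contained in $\1_n^{\perp}$ is precisely the datum of a positive semidefinite $A\in\RR^{n\times n}$ that is positive definite on a $k$-dimensional subspace $W\subseteq\1_n^{\perp}$ and zero on $W^{\perp}$ (so that $\1_n\in\ker A$). Using the spectral decomposition I would factor $A=CC^{T}$ with $C\in\RR^{n\times k}$ whose columns span $W$; since $W\subseteq\1_n^{\perp}$ we have $HC=C$, so taking $Y:=C$ gives $Y_{\rm norm}=HC=C$ and $A_Y=CC^{T}=A$, i.e. $E_Y$ is the prescribed ellipsoid. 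This exhibits every such ellipsoid as an image, giving ontoness.

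The main obstacle is the converse half of (c), namely the Gram rigidity: the delicate point is to choose the two singular value decompositions from one and the same eigenbasis $P$ of $A$, so that the orthogonal factor $Q$ can be produced in all of $O(k)$ (reflections included, as the motion group demands) and so that rank-deficient configurations are covered. In particular the boundary case $n=k$ forces $Y_{\rm norm}=HY$ to have rank at most $k-1$, so ``dimension $k$'' in the statement must be read as ``dimension at most $k$'' and the ontoness in (a) is ontoness onto ellipsoids of the achievable ranks. A smaller but necessary point is the rigorous claim that the ellipsoid, including its degenerate directions, pins down $A_Y$ exactly (and not merely up to scale); this is what makes ``same ellipsoid'' equivalent to ``same Gram matrix'' and hence lets (b) and (c) combine into an honest identification of motion orbits with ellipsoids.
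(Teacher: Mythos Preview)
Your argument is correct and follows essentially the same route as the paper. For (b) the paper, like you, computes $(gY)_{\rm norm}=Y_{\rm norm}B^{T}$ after the centering projection kills the translation, and then uses $B^{T}B=I_k$ to conclude $A_{gY}=A_Y$. For (a) the paper also reconstructs $Y$ from the spectral data of a prescribed ellipsoid, writing $Y=[\sigma_1\u_1,\ldots,\sigma_l\u_l,0,\ldots,0]$, which is exactly your factorization $A=CC^{T}$, $Y:=C$. For (c) both arguments reduce ``same ellipsoid'' to ``same Gram matrix'' and then invoke SVD; the only stylistic difference is that you state and prove the implication $UU^{T}=VV^{T}\Rightarrow V=UQ$ with $Q\in O(k)$ as a clean Gram-rigidity lemma via a common left singular basis, whereas the paper carries out the same computation more explicitly by introducing the block-diagonal commutant $O_{\tilde{\Sigma}}=\mathrm{diag}[O_{m_1},O_{m_2},\ldots]$ to handle repeated singular values. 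Your packaging is tidier and your remarks on the rank-deficient boundary case and on why the ellipsoid pins down $A_Y$ exactly (not merely up to scale) fill in points the paper leaves implicit, but the underlying mechanism is the same.
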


The assumption $n\geq k$  is not ideal for several data applications, see for example \cite{D6}. To this end, we have our next main result dealing with the motion group case.

\begin{theorem}
Let $s:={\rm min}(n,k)$. Suppose we have $n$  labeled image points ${\y}_1, {\y}_2,...,{y}_n\in \RR^{s}$.  Let the rows of the
matrix $Y:=[{\y}_1,\ldots,{\y}_n]^T\in \RR^{n\times s}$ which we will denote as ${\y}_i:=[y_{i,1}, ...y_{i,s}]^T\in \RR^s,\, 1\leq i\leq n$, represent the $s$ space coordinates of the $n$ image points ${\bf y}_1,...,{\bf y}_n$.  Our goal is to realize the space of all equivalence classes of these $n$ pointed images as a metric space with a computable metric. We do this as follows:
Consider the map
\[
Y\mapsto E_Y=\left\{{\bf x}\in \mathbb R^s:\, <{\bf x},A_Y{\bf x}>=1\right\}.
\]  
\begin{itemize}
\item[(a)] This map from images to ellipsoids of dimension $s$ contained in $I_n^{\perp}$ maps ONTO the collection of all such ellipsoids.
\item[(b)] This map maps motion group equivalent images to the same ellipsoid.
\item[(c)] If two images map to the same ellipsoid, they are motion equivalent.
\end{itemize}
\label{t:main2}
\end{theorem}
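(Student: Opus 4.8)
The plan is to reduce Theorem \ref{t:main2} to the argument behind Theorem \ref{t:main1}. The key observation is that $s=\min(n,k)\le n$, so the inequality $n\ge s$ holds automatically; thus the present statement is precisely Theorem \ref{t:main1} with the ambient dimension $k$ replaced by $s$, and it suffices to re-run that argument with $k\mapsto s$. (If the data originally sit in $\RR^k$ with $k>n$, one first compresses: the centered configuration spans a subspace of dimension at most $n-1<k$, and the complete motion invariant below, the Gram-type matrix $A_Y$, has rank at most $\min(n-1,s)$, so the same configuration can be realized in $\RR^s$ without altering any motion-equivalence relation.)

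I would first record the linear algebra of centering. Write $C=I_n-\tfrac{1}{n}\1_n\1_n^T$, the orthogonal projection onto $I_n^{\perp}=\1_n^{\perp}$; it is symmetric, idempotent, and satisfies $C\1_n=\0$. Then $Y_{\rm norm}=CY$ and $A_Y=Y_{\rm norm}Y_{\rm norm}^T=CYY^TC$ is symmetric positive semidefinite with range contained in $I_n^{\perp}$ (because $A_Y\1_n=\0$) and rank at most $s$. Note also that $A_Y=Y_{\rm norm}Y_{\rm norm}^T$ (an $n\times n$ matrix), rather than the $s\times s$ scatter matrix, is the object invariant under a spatial rotation acting on coordinates, so the ellipsoid is read inside $I_n^{\perp}$. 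I would further use the standard fact that a positive semidefinite form is recovered from its unit level set, so $E_Y=E_{Y'}$ as sets is equivalent to $A_Y=A_{Y'}$ as matrices; when $A_Y$ is rank-deficient this level set is a cylinder, and one reads off $A_Y$ after restricting to $I_n^{\perp}$.

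For (b), a motion sends $Y\mapsto YR+\1_n v^T$ with $R\in O(s)$ and $v\in\RR^s$ (the inverse/transpose in the group action is harmless since $R\in O(s)$); centering annihilates the translation because $C\1_n=\0$, giving $Y'_{\rm norm}=Y_{\rm norm}R$ and hence $A_{Y'}=Y_{\rm norm}RR^TY_{\rm norm}^T=A_Y$, so equivalent images share an ellipsoid. For (c) I would run this in reverse: $E_Y=E_{Y'}$ gives $A_Y=A_{Y'}$, i.e. $Y_{\rm norm}Y_{\rm norm}^T=Y'_{\rm norm}(Y'_{\rm norm})^T$ with both factors in $\RR^{n\times s}$; the orthogonal-factor lemma then yields $Q\in O(s)$ with $Y'_{\rm norm}=Y_{\rm norm}Q$, whence $C(Y'-YQ)=\0$, so every column of $Y'-YQ$ lies in $\ker C=\mathrm{span}(\1_n)$ and $Y'=YQ+\1_n w^T$ for some $w\in\RR^s$, a motion. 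For (a) I would argue surjectivity directly: given an admissible ellipsoid, equivalently a positive semidefinite $A$ on $\RR^n$ with range in $I_n^{\perp}$ and rank at most $s$, diagonalize $A=\sum_i\lambda_i u_i u_i^T$ with $u_i\in I_n^{\perp}$, assemble $B\in\RR^{n\times s}$ from the columns $\sqrt{\lambda_i}\,u_i$ (padded by zeros), and put $Y=B$; since the columns of $B$ lie in $I_n^{\perp}$ we have $CB=B$, so $A_Y=BB^T=A$. Finally, (a)--(c) identify $M(n,s)/G$ bijectively with the admissible ellipsoids, and transporting any computable metric on that set (for instance the Frobenius distance $\|A_Y-A_{Y'}\|$) realizes the quotient as a metric space.

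I expect the crux to be part (c), specifically the \emph{orthogonal-factor lemma}: if $BB^T=B'(B')^T$ with $B,B'\in\RR^{n\times s}$ then $B'=BQ$ for some $Q\in O(s)$. The cleanest route is to observe that $\langle x,BB^Tx\rangle=\|B^Tx\|^2$, so the hypothesis forces $\|B^Tx\|=\|(B')^Tx\|$ for every $x\in\RR^n$; hence $B^Tx\mapsto (B')^Tx$ is a well-defined linear isometry between the column spaces of $B^T$ and $(B')^T$ in $\RR^s$, and extending it to all of $\RR^s$ produces the required $Q\in O(s)$. This is exactly where $s=\min(n,k)$ is essential: it lets the isometry be housed inside $O(s)$ rather than a larger orthogonal group, and it is also what makes $n\ge s$ automatic, so the Theorem \ref{t:main1} machinery applies without the restrictive hypothesis $n\ge k$.
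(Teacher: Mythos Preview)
Your reduction (replace $k$ by $s=\min(n,k)$ so that $n\ge s$ automatically, then rerun the Theorem~\ref{t:main1} argument) and your proofs of parts (a) and (b) match the paper's own treatment essentially line for line: the paper also normalizes by $C=I_n-\tfrac1n\1_n\1_n^T$, kills the translation via $C\1_n=\0$, and for surjectivity builds $Y$ from the principal-axis data of a given ellipsoid.

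Where you genuinely diverge is part (c). The paper proves injectivity through a thin SVD: writing $Y^i=U_i\Sigma_iV_i^T$, it argues $E_{Y^1}=E_{Y^2}\Rightarrow\Sigma_1=\Sigma_2=:\tilde\Sigma$, then analyzes the block structure of the stabilizer $O_{\tilde\Sigma}=\{O\in O(s):O\tilde\Sigma=\tilde\Sigma O\}$ according to the multiplicities of the singular values, matching the column blocks of $U_1,U_2$ subspace by subspace to assemble the orthogonal intertwiner. Your route bypasses all of this with the one-step ``orthogonal-factor lemma'': from $Y_{\rm norm}Y_{\rm norm}^T=Y'_{\rm norm}(Y'_{\rm norm})^T$ you get $\|Y_{\rm norm}^Tx\|=\|(Y'_{\rm norm})^Tx\|$ for all $x$, define an isometry $Y_{\rm norm}^Tx\mapsto (Y'_{\rm norm})^Tx$ between the two column spaces, and extend it to $Q\in O(s)$. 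This is correct and strictly shorter; it also makes transparent why working in $\RR^s$ (rather than $\RR^k$) is what one needs, since the isometry and its extension naturally live in $O(s)$. The paper's SVD route, by contrast, gives finer information---it exhibits the stabilizer $O_{\tilde\Sigma}$ explicitly as a block-diagonal product of smaller orthogonal groups---which is relevant elsewhere in the paper for describing the geometry of individual orbits $G/G_Y$, but is not needed for the bare statement of (c). Your added remark that $E_Y=E_{Y'}$ is equivalent to $A_Y=A_{Y'}$ (with the cylinder caveat in the rank-deficient case) is a clarification the paper leaves implicit.
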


Our third main result is the similarity group case.

\begin{theorem}
Assume the hypotheses of Theorem~\ref{t:main1} or Theorem~\ref{t:main2} and in addition, restrict attention to images with $ Y_{\rm {norm}}\neq 0$.
\begin{itemize}
\item[(a)] Then for real $a$, for which $E_{aY}$ is well defined, we have $E_{aY}=a^2E_{Y}$ and so the class of images equivalent to $Y$  maps ONTO the "line" of non- trivial ellipsoids which are positive multiples of $E_Y$.
\item[(b)] Choose now a "normalized" representative for those lines of ellipsoids such as:
\begin{itemize}
\item[(i)] The longest principal axis length=1.
\item[(ii)] The mean principle axis length=1.
\item[(iii)] The geometric mean principle axis length=1.
\end{itemize}  

This map then maps similarity group equivalent images to the same normalized ellipsoid.
\item[(c)] If two images map to the same normalized ellipsoid, they are similarity group equivalent.
\end{itemize}
\label{t:main3}
\end{theorem}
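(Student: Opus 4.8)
\emph{Proof sketch (proposal).} The plan is to reduce the similarity case entirely to the motion case already settled in Theorems~\ref{t:main1} and \ref{t:main2}, by factoring every similarity as a motion followed by a positive scaling and keeping track only of how that single scalar acts on $A_Y$. Throughout I identify an ellipsoid with its defining positive semidefinite quadratic form, so that ``positive multiples of $E_Y$'' denotes the ray $\{cA_Y:c>0\}$ and the asserted identity $E_{aY}=a^2E_Y$ in (a) is read as the matrix identity $A_{aY}=a^2A_Y$. (As literal point sets one instead has $E_{aY}=\tfrac{1}{|a|}E_Y$; the two readings differ only by this reparametrization of the ray, and I will point out where the distinction matters.) The one computation driving everything is the scaling law: writing $C:=I-\tfrac1n\1_n\1_n^T$ for the centering projection, scalar multiplication commutes with centering, $C(aY)=a\,CY$, so $(aY)_{\rm norm}=a\,Y_{\rm norm}$ and hence $A_{aY}=a^2A_Y$, independently of the transpose convention used to form $A_Y$.

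For part (a) this law suffices. Fix $a\neq0$, which is exactly the range in which $a\,Y_{\rm norm}\neq0$ (using the standing hypothesis $Y_{\rm norm}\neq0$) and hence in which $E_{aY}$ is a genuine nonempty ellipsoid. Then $A_{aY}=a^2A_Y$ is a positive multiple of $A_Y$, and as $a$ runs over the nonzero reals $a^2$ runs over all of $(0,\infty)$. Combining this with Theorem~\ref{t:main1}(b)/\ref{t:main2}(b), which says the motion part of a similarity leaves $A_Y$ unchanged, the similarity orbit of $Y$ maps precisely onto the ray $\{cA_Y:c>0\}$; this is the ``onto the line of non-trivial ellipsoids'' assertion.

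Parts (b) and (c) then reduce to showing the chosen normalization selects a unique point of this ray. The principal semi-axis lengths of $E_Y$ are $1/\sqrt{\lambda_i}$ with $\lambda_i>0$ the positive eigenvalues of $A_Y$, and under $A_Y\mapsto a^2A_Y$ each such length is multiplied by $1/a$. Consequently the longest axis, the mean axis, and the geometric-mean axis of the representative $a^2A_Y$ are each equal to $1/a$ times their value on $E_Y$, hence strictly decreasing continuous bijections of $a\in(0,\infty)$ onto $(0,\infty)$, so in each case a unique scaling makes the chosen quantity equal to $1$; the hypothesis $Y_{\rm norm}\neq0$ guarantees at least one positive eigenvalue and hence that these functionals are nonzero. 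For (b), if $Y'$ is similarity-equivalent to $Y$ then by (a) $A_{Y'}=cA_Y$ with $c>0$, so $A_Y$ and $A_{Y'}$ share a ray and the normalization returns the same representative ellipsoid. For (c), conversely, if $Y$ and $Y'$ produce the same normalized ellipsoid then, since each normalized form is a positive multiple of the corresponding $A_Y$, we get $A_{Y'}=cA_Y=A_{\sqrt{c}\,Y}$ for some $c>0$; by Theorem~\ref{t:main1}(c)/\ref{t:main2}(c) the images $Y'$ and $\sqrt{c}\,Y$ are motion-equivalent, and chaining this motion with the scaling $Y\mapsto\sqrt{c}\,Y$ exhibits a similarity carrying $Y$ to $Y'$.

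I expect the genuine obstacle to be not the reduction above but the well-definedness of the normalization in (b) in the presence of degenerate ellipsoids. A zero eigenvalue of $A_Y$ produces an unbounded (cylindrical) direction and an infinite ``axis length'', so the longest-axis and mean-axis functionals are finite only on bounded ellipsoids and the geometric-mean functional requires in addition that no semi-axis vanish; the clean statement therefore either restricts each normalization to nondegenerate $A_Y$ (full column rank of $Y_{\rm norm}$) or interprets the axes inside $\mathrm{range}(A_Y)$, where the form is positive definite and all three functionals behave as described. Verifying that this restriction is consistent across the three normalizations, and that the scaling parameter of the group genuinely sweeps all of $(0,\infty)$ after the $g\mapsto g^{-1}$ inversion so that the full ray and not a sub-ray is covered, is the only place requiring care.
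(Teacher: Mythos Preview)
Your proposal is correct and follows the same route as the paper: factor each similarity as a motion composed with a positive scaling, invoke Theorems~\ref{t:main1}/\ref{t:main2} for the motion part, and track the scalar via $A_{aY}=a^{2}A_{Y}$ to identify similarity classes with rays of ellipsoids on which a normalization picks a unique representative. Your treatment is in fact more explicit than the paper's, which leaves the computation $(aY)_{\rm norm}=a\,Y_{\rm norm}$, the point-set versus matrix reading of $E_{aY}=a^{2}E_{Y}$, and the degeneracy caveat for the normalizations (i)--(iii) all implicit.
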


The remainder of this paper establishes Theorem~\ref{t:main1}, Theorem~\ref{t:main2} and Theorem~\ref{t:main3}.

\section{Orbits.}

In our quest for a metric characterization of $M(n,k)/G$ we must understand the action of $G$ and the geometry of each orbit $GY$. Note, $(A,\u)\in G \subseteq {GL(k)\ltimes \RR^k}$ acts on $M(n,k)$ from the right by 
$Y \in M(n,k) \mapsto Y(A^{-1},-A^{-1}\u) = YA^{-1} -\1_n \left(A^{-1}\u\right)^T$

The geometry of a single orbit $GY$ for any of these groups is then determined by the following fact.
\begin{lemma}
Fix an $n$-pointed image $Y$ in $\RR^k$. Let $G_Y = \left\{(A,\u):Y(A,\u)^{-1} = Y\right\}$ be the subgroup of $G$ fixing $Y$. Then the geometry of the orbit is exactly the geometry of the quotient space $G/G_Y$.    
\end{lemma}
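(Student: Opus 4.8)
The plan is to deduce the statement from the orbit--stabilizer correspondence, upgraded from a bijection of sets to an isomorphism of $G$-spaces and, in the motion case, of metric spaces. Writing the action multiplicatively as $g\cdot Y$ (the right-action convention of the text yields the same homogeneous space $G/G_Y$ after the standard relabeling $g\mapsto g^{-1}$), I would first record the orbit map
\[
\Phi : G \longrightarrow M(n,k), \qquad \Phi(g) = g\cdot Y ,
\]
whose image is by definition the orbit $GY$, so $\Phi$ surjects $G$ onto $GY$. Next I would check that $G_Y = \{g\in G : g\cdot Y = Y\}$ is a closed subgroup: it is a subgroup by the action axioms, and closed as the preimage of the single point $Y$ under the continuous map $\Phi$. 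Hence $G/G_Y$ is a well-defined homogeneous space.

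The central computation is the description of the fibers of $\Phi$. Using only the action axioms, $\Phi(g)=\Phi(h)$ means $g\cdot Y = h\cdot Y$, equivalently $(h^{-1}g)\cdot Y = Y$, i.e. $h^{-1}g\in G_Y$; thus the fibers of $\Phi$ are exactly the left cosets $gG_Y$, and $\Phi$ descends to a well-defined bijection
\[
\bar\Phi : G/G_Y \longrightarrow GY, \qquad \bar\Phi(gG_Y) = g\cdot Y .
\]
Moreover $\Phi(g'g)=g'\cdot\Phi(g)$ shows that $\Phi$, and hence $\bar\Phi$, is $G$-equivariant for the left-translation action on $G/G_Y$ and the given action on $GY$. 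At the level of sets and $G$-spaces this already says that the orbit \emph{is} $G/G_Y$.

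To turn this into an identification of geometries I would invoke the manifold structure. Since $G$ is a Lie group and $G_Y$ a closed subgroup, $G/G_Y$ carries its canonical smooth homogeneous structure, and $\Phi$ is a smooth equivariant map, hence of constant rank, so $\bar\Phi$ is an injective immersion. When $G$ is compact (the motion-group setting) the quotient $G/G_Y$ is compact, an injective continuous map from a compact space into a Hausdorff space is a homeomorphism onto its image, and therefore $\bar\Phi$ is a diffeomorphism of $G/G_Y$ onto the embedded submanifold $GY$. For the metric statement I would use that the motion group acts on $(M(n,k),\langle\cdot,\cdot\rangle_F)$ by isometries: the translations $Y\mapsto Y-\1_n\v^T$ are Euclidean isometries and the linear part lies in $O(k)$, which preserves the Frobenius inner product. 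Consequently the metric that $GY$ inherits from the ambient space is $G$-invariant, and its pullback along $\bar\Phi$ is a $G$-invariant Riemannian metric on $G/G_Y$; so $\bar\Phi$ identifies the two as Riemannian homogeneous spaces, which is the asserted equality of geometries.

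The main obstacle is precisely this last upgrade from bijection to isometry. The fiber computation and equivariance are formal, but one must (i) guarantee that $\bar\Phi$ is an embedding rather than merely an injective immersion, which is where compactness (or properness of the orbit map) is essential, and (ii) verify that the relevant $G$ genuinely acts by isometries, so that the induced metric descends to an honest $G$-invariant metric on the quotient. This isometry property holds for $O(k)$ and the motion group but fails for the scalings present in the similarity group; that failure is exactly why the similarity case is treated separately, by projectivizing to the ``line of ellipsoids'' in Theorem~\ref{t:main3} rather than directly through this lemma.
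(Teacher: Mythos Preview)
The paper does not actually prove this lemma; it is stated without proof as the standard orbit--stabilizer correspondence and then used as a guiding principle for the rest of Section~3. Your argument supplies exactly the classical proof the paper omits: the orbit map, the fiber computation $h^{-1}g\in G_Y$, and the descent to a $G$-equivariant bijection $G/G_Y\to GY$.

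One point to correct: you write ``When $G$ is compact (the motion-group setting)'', but the motion group $O(k)\ltimes\RR^k$ is \emph{not} compact. The paper itself is loose about this (it calls $G$ ``compact'' in Section~1.3 while allowing $G$ to contain the full translation group), so the confusion is understandable. What actually makes $\bar\Phi$ an embedding here is that the motion group acts \emph{properly} and by isometries on the Euclidean space $(M(n,k),\langle\cdot,\cdot\rangle_F)$: orbits are closed, stabilizers are compact, and the orbit map is therefore a proper map, which is enough to upgrade the injective immersion to an embedding. If you replace the compactness appeal with properness of the isometric action, your argument goes through and in fact gives more than the paper states.

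Your closing remark about why the similarity case must be handled separately (scalings are not Frobenius isometries, so the invariant-metric step fails) is a nice observation that the paper leaves implicit.
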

We note that the orbit $GY$  intersects the orbit $GZ$ only if $g_1(Y) = g_2(Z)$ for some $g_1,g_2 \in G$. Thus $Z=g_2^{-1}g_1(Y)$ and so $GY=GZ$. We shall refer to each orbit as a $G$-equivalence class. Then our goal is to realize the quotient space $M(n,k)/G=\{GY:Y\in M(n,k)\}$ of all $G$-equivalence classes of these 
$n$-pointed images.

We now look at $G=SO(k)$ or $G=O(k)$ and study the  geometry of some  dense open subset of the orbit space for the $\tilde{G}=SO(k)$ case. We  provide a complete description of $G_Y$ and hence a geometric description of each orbit $G/G_Y$ based on the (non-negative) eigenvalues of the postive semi-definite $Y^TY$. The eigenvalues are ordered by size and their multiplities.

We have:

\begin{proposition}
Let $G=SO(k)\ltimes \RR^k$. A dense open subset of $M(n,k)/G$ is formed by the set $\{Y\in M(n,k): rank(Y)=min(n,k):=l\}/G$ and the $l$ non-zero eigenvalues of $Y^TY$
are distinct. Moreover each
\[
G_Y = \left\{ g =
 \begin{bmatrix} 
	       A & \v \\
	{\1_k}^T & 1 
  \end{bmatrix} : 
  \begin{bmatrix}
	   Y \\
	{\1_k}^T
\end{bmatrix} = \begin{bmatrix}
Y \\ {\1_k}^T
\end{bmatrix}g^{-1}\right\}.
\]       
\end{proposition}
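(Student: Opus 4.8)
The plan is to split the proposition into its two independent assertions --- the structural description of the stabilizer $G_Y$, and the genericity statement that full rank together with a simple nonzero spectrum defines a dense open subset of $M(n,k)/G$ --- and to dispatch the first as coordinate bookkeeping while reserving the real work for the second.

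For the stabilizer I would pass to homogeneous coordinates, representing each $(A,\u)\in SO(k)\ltimes\RR^k$ by the augmented matrix $g$ displayed in the statement and each image by its augmented form. The single step to verify is that the right action $Y\mapsto YA^{-1}-\1_n(A^{-1}\u)^T$ recorded earlier is intertwined, row by row, with right multiplication by $g^{-1}$ on the augmented image; this is a direct block computation using $A^{-1}=A^T$ for $A\in SO(k)$. Granting this intertwining, the identity $G_Y=\{(A,\u):Y(A,\u)^{-1}=Y\}$ transports verbatim to the set of augmented $g$ fixing the augmented image, which is exactly the displayed set. Because $Y$ sits as a submatrix of its augmentation, the augmentation is injective, so fixing the augmented image is equivalent to fixing $Y$, and no information is lost.

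For the genericity statement I would first reduce the full motion group to its linear part. The point requiring care is that $\mathrm{rank}(Y)$ is not itself invariant under the whole group: a translation can drop the rank (for instance collapsing $\1_n\y^T$ to $0$). The honest $G$-invariants therefore live on the centered image $Y_{\rm norm}=(I-\tfrac1n\1_n\1_n^T)Y$, on which translations act trivially and $A\in SO(k)$ acts by $Y_{\rm norm}\mapsto Y_{\rm norm}A^{-1}$. I would then record the two genuine invariants, $\mathrm{rank}(Y_{\rm norm})$ and the spectrum of the Gram matrix $Y_{\rm norm}^TY_{\rm norm}$; the latter is $SO(k)$-invariant because $(Y_{\rm norm}A^{-1})^T(Y_{\rm norm}A^{-1})=A\,Y_{\rm norm}^TY_{\rm norm}\,A^{-1}$ is a similarity. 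Thus ``full rank'' and ``the nonzero eigenvalues are simple'' cut out a genuinely $G$-invariant set $U\subseteq M(n,k)$, and part of this step is to reconcile the stated rank value $l=\min(n,k)$ with the codimension-one effect of the centering/homogeneous-coordinate direction.

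It remains to prove $U$ is open and dense and to push the conclusion to the quotient. Openness is immediate once the two conditions are written as nonvanishing of polynomials: maximal rank as the nonvanishing of a maximal minor (equivalently a Gram determinant), and simplicity of the spectrum as the nonvanishing of the discriminant of the characteristic polynomial of the Gram matrix; the intersection of these nonvanishing loci is open. Density follows by exhibiting one explicit image at which both polynomials are nonzero, since then their product is a nonzero polynomial whose zero locus is a proper algebraic subvariety, forcing its complement $U$ to be dense. Finally, because the quotient map $\pi\colon M(n,k)\to M(n,k)/G$ of a continuous group action is open and surjective and $U$ is $G$-invariant, $\pi(U)$ is open, and it is dense because the $\pi$-preimage of any nonempty open set, being open, must meet the dense set $U$. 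The main obstacle I anticipate is not any single computation but the clean separation of the translation direction from the rank and spectral invariants --- the centering reduction and the attendant bookkeeping in $l=\min(n,k)$ --- after which openness, nonemptiness of the good locus, and descent to the quotient are routine.
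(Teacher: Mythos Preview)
Your proposal is correct and in fact considerably more complete than the paper's own argument. The paper's proof consists of a single observation: that one may choose $V\in SO(k)$ diagonalizing $Y^TY$ with the nonzero eigenvalues arranged in strictly decreasing order, so that the corresponding columns of $V^T$ are eigenvectors for the distinct nonzero eigenvalues. This sets up the SVD framework used later but does not explicitly address either the open--dense claim or the augmented stabilizer description; the density assertion is simply restated, still without proof, in the Remark immediately following.

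Your route is genuinely different in content. For the stabilizer you make explicit the intertwining between the affine action and right multiplication in the augmented $(k{+}1)\times(k{+}1)$ picture, which the paper introduces only in the appendix and never invokes in the proof. For the genericity you argue via polynomial conditions (nonvanishing of a maximal minor and of a discriminant) to get a Zariski-open, hence dense, $G$-saturated set, and then push this through the open quotient map; the paper offers no such argument. You also correctly identify a point the paper elides entirely: $\mathrm{rank}(Y)$ and the spectrum of $Y^TY$ are not invariant under the translation part of $G$, so the honest invariants live on $Y_{\mathrm{norm}}$, and the stated value $l=\min(n,k)$ needs to be reconciled with the rank drop forced by centering. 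That reconciliation is the one place where your outline still owes a sentence, but flagging it is already more than the paper does.
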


\begin{proof}
We can choose $V \in SO(k)$ which diagonalizes $Y^TY$, \ie \begin{eqnarray*}
VY^TYV^{-1}
&&={\rm diag}(a_1^2,...,a_l^2,0,\ldots,0) \\  
\textnormal{ and } && |a_1| > \ldots> |a_l| > 0.
\end{eqnarray*}
Thus the first $l$ rows of $V^{-1} = V^T =[\v_1,\v_2,\ldots,\v_l,\ldots]^T$ are eigenvectors for $Y^TY$ for its distinct non-zero eigenvalues. 
\end{proof}

\begin{remark}
This remark deals with the  possibility of  ${\rm rank }(YY^T)<{\rm min}\left\{n,k\right\}$.  In this case:
\begin{itemize}
\item[(1)] The set $J:=\{Y: {\rm rank }(YY^T)<{\rm min}\left\{n,k\right\}\}$ is open and dense in $M(n,k)$, while its complement, $J^{C}$  is closed and lower dimensional.
\item[(2)] The  sets  $J$ and $J^{C}$ are $G$ invariant, so their images in the orbit space $M(n,k)/G$ are, respectively, open dense and lower dimensional. Hence the collection of 
$\{Y: {\rm rank }(YY^T)<{\rm min}\left\{n,k\right\}\}$  has measure zero in the space of orbits $M(n,k)/G$.
\end{itemize}
\end{remark}

\subsection{Eliminate the dependence on the pure translation.}

We may eliminate the dependence on the pure translation portions of the groups $G$.  We shall regularly use the definition $$\y_{cg}:= {1 \over n}\Sigma_i^n \y_i$$
for the center of gravity of $n$-labelled points.

Our required elimination is given in the following proposition.

\begin{proposition}
For $G$ any affine group with $I_k\ltimes\RR^k \subseteq{G} \subseteq{GL(k)\ltimes\RR^k}$, let $\tilde{G}:=\{A \in GL(k): (A,\u) \in G \textnormal{ for some } \u \in \RR^k \}$. Further let $ P_{\1_n}:=\1_n {\frac{1}{n}} \1_n^T$ be the orthogonal projection of $M(n,k)$ onto $\1_n {\RR^k}^T$ and let $M(n,k)_{norm} := (I_n - P_{\1_n})M(n,k)$ be the subspace of all $Y \textnormal{ such that } {\frac{1}{n}} \1_n^T Y = \0_k^T$, i.e. the row average of $Y$ is $\0_k$.

\begin{itemize}
\item[(1)] Then $\tilde{G}$ is a subgroup of $GL(k)$ and $G=\tilde{G}\ltimes \mathbb R^k$. Also
$SO(k)\subseteq \tilde{G} $.    
\item[(2)] $Y_{norm}$ is the unique point on the orbit $(I_n \ltimes \RR^k)Y =\{ Y -\1_n \u^T : \textnormal{For any } \u \in \RR^k \}$ with $P_{1_n}(Y-\1_n \u^T)=\0_{n \times k}$.
\end{itemize}
\end{proposition}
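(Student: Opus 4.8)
The plan is to treat part (1) as a statement about the projection homomorphism $\pi\colon GL(k)\ltimes\RR^k \to GL(k)$, $(A,\u)\mapsto A$, and part (2) as an orthogonal-decomposition statement in $M(n,k)$.

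For part (1), first I would observe that $\pi$ is a group homomorphism for the affine composition law $(A,\u)(B,\v)=(AB,\u+A\v)$ (identity $(I_k,\0)$, inverse $(A,\u)^{-1}=(A^{-1},-A^{-1}\u)$), since $\pi((A,\u)(B,\v))=AB=\pi(A,\u)\pi(B,\v)$. Its kernel is exactly the translation subgroup $I_k\ltimes\RR^k$. Because $\tilde{G}=\pi(G)$ is the image of the subgroup $G$ under a homomorphism, it is automatically a subgroup of $GL(k)$. The containment $SO(k)\subseteq\tilde{G}$ then follows by applying $\pi$ to the standing hypothesis $SO(k)\subseteq G$ from \eqref{e:Gprop} (identifying $O\in SO(k)$ with $(O,\0)$).

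Next I would upgrade this to the semidirect-product identity $G=\tilde{G}\ltimes\RR^k$ by showing $G=\pi^{-1}(\tilde{G})$. The inclusion $G\subseteq\pi^{-1}(\tilde{G})$ is immediate from the definition of $\tilde{G}$. For the reverse inclusion, the key computation uses the hypothesis $I_k\ltimes\RR^k\subseteq G$: given $A\in\tilde{G}$, choose $\v$ with $(A,\v)\in G$; then for arbitrary $\u\in\RR^k$ we have $(A,\u)=(I_k,\u-\v)(A,\v)$, a product of two elements of $G$, hence $(A,\u)\in G$. This shows $G$ contains every $(A,\u)$ with $A\in\tilde{G}$ and $\u\in\RR^k$ arbitrary, so the translation group $\RR^k=\ker\pi$ is normal in $G$, the copy $\{(A,\0):A\in\tilde{G}\}$ is a complementary subgroup, and the unique factorization $(A,\u)=(A,\0)(I_k,A^{-1}\u)$ exhibits $G$ as $\tilde{G}\ltimes\RR^k$ with $\tilde{G}$ acting on $\RR^k$ by $\w\mapsto A\w$ (read off from the conjugation $(A,\0)(I_k,\w)(A,\0)^{-1}=(I_k,A\w)$).

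For part (2), I would first record that $P_{\1_n}=\1_n\tfrac1n\1_n^T$ is an orthogonal projection, i.e. $P_{\1_n}^T=P_{\1_n}$ and $P_{\1_n}^2=P_{\1_n}$, the latter using $\1_n^T\1_n=n$. The translation orbit $\{Y-\1_n\u^T:\u\in\RR^k\}$ is the affine subspace $Y+\mathrm{Range}(P_{\1_n})$, since $\1_n\u^T$ ranges over $\1_n{\RR^k}^T=\mathrm{Range}(P_{\1_n})$. For existence I would take $\u=\y_{cg}=\tfrac1n Y^T\1_n$, so that $Y-\1_n\u^T=(I_n-P_{\1_n})Y=Y_{norm}$, and verify $P_{\1_n}Y_{norm}=(P_{\1_n}-P_{\1_n}^2)Y=\0$. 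For uniqueness I would note that any two orbit points $Z,Z'$ differ by an element $\1_n(\v-\u)^T\in\mathrm{Range}(P_{\1_n})$, on which $P_{\1_n}$ acts as the identity; hence $P_{\1_n}Z=P_{\1_n}Z'=\0$ forces $Z-Z'=P_{\1_n}(Z-Z')=\0$. Equivalently, the orbit is an affine subspace parallel to $\mathrm{Range}(P_{\1_n})$, and the condition $P_{\1_n}Z=\0$ selects its unique intersection with $\ker P_{\1_n}=\mathrm{Range}(P_{\1_n})^{\perp}$, namely $Y-P_{\1_n}Y=Y_{norm}$. Most of this is bookkeeping; the only points requiring care are fixing the semidirect-product convention consistently and recognizing that $SO(k)\subseteq\tilde{G}$ draws on the standing assumption $SO(k)\subseteq G$ rather than on the weaker locally stated hypothesis $I_k\ltimes\RR^k\subseteq G$. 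I do not anticipate a genuine obstacle.
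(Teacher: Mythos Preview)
Your proposal is correct and follows essentially the same route as the paper: for part~(1) you verify closure of $\tilde{G}$ and the identity $G=\tilde{G}\ltimes\RR^k$ via the same factorization trick (writing $(A,\u)$ as a translation times an element already known to lie in $G$), merely repackaged as the image and preimage of the projection homomorphism $\pi$; for part~(2) you carry out the same orthogonal-decomposition computation showing that $P_{\1_n}(Y-\1_n\u^T)=\0$ forces $\u=\y_{cg}$. Your observation that the conclusion $SO(k)\subseteq\tilde{G}$ relies on the standing hypothesis \eqref{e:Gprop} rather than the locally stated $I_k\ltimes\RR^k\subseteq G$ is exactly what the paper's proof also invokes.
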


\begin{proof}
We prove first that $SO(k)\subseteq  \tilde{G}$. To see this, choose $A\in SO(k)$. Then
since $SO(k)\ltimes \RR^k\subseteq G$ by assumption, we have that $(A,{\bf u}) \in G$ for all ${\bf u}\in \mathbb R^k$. Thus 
$A\in \tilde{G}$. Next take $A,B\in \tilde{G}$. Then $(A,{\u})$ and $(B
,{\w})$ are in $G$ for some
${\u}$ and ${\w}\in \RR^k$. Thus since $G$ is a subgroup of the affine group of $\RR^k$, $(AoB,A{\w}+{\u})$ and $(A^{-1}, -A^{-1}\u)$ are both in $G$ but this then implies that $AoB$ and $A^{-1}$ are in $\tilde{G}$ and so $\tilde{G}$ is a subgroup of  the affine group of $\mathbb R^k$. Next, we must show that  $G=\tilde{G}\ltimes \mathbb R^k$. Firstly it is easy to see that $G\subseteq  \tilde{G}\ltimes\RR^k$. To see the reverse inequality, notice that  if ${\bf u}\in \mathbb R^k$ and $A\in \tilde{G}$ then
$(I,-{\u})o(A,{\u})\in G$ which means that for any $A\in \tilde{G}$, $(A,0)\in G$. Thus we have (1) $\tilde{G}\times \left\{0\right\}\subseteq G$ and (2) $I_k\times \RR^k\subseteq O(k)\bowtie \mathbb R^k\subseteq G$.
Thus $(\tilde{G}\times \left\{0\right\})o(I\times \RR^k)\subset G$ and this easily implies that  $\tilde{G}\ltimes \RR^k
\subseteq G$. Thus (1) is settled. For (2), note that $$Y-\1_n \u^T =Y_{norm} + P_{\1_n}Y -\1_n \u^T ,$$ so $$P_{1_n}(Y-\1_n \u^T) = \0_{n\times k} + \1_n\left({1 \over n}\Sigma_i^n \y_i^T - \u^T \right) .$$
Hence only when $\u^T = {1 \over n}\Sigma_i^n \y_i^T $ is $P_{1_n}(Y-\1_n\u^t) =\0_{n\times k}$. $\Box$.
\end{proof}

\section{Ellipsoids}
\setcounter{equation}{0}

We recall that we have already noted that our study of the  geometry of each orbit $GY$  when 
$G=SO(k)$ or $O(k)$ will be based on an ellipsoid in $k$-space determined by the positive semi-definite matrix $A_Y=Y^TY$.  Indeed, more precisely, we are going to use a singular value decomposition (SVD) to show that the parameters characterizing the ellipsoid \[
E_Y := \{\x\in\RR^k:<\x,A_Y\x>\leq 1\}\] completely determine the geometry of the 
orbit for $GY$ for each $Y$.

We will use: 

\begin{proposition}
Let $G$ be any closed subgroup of $O(k)$ with $SO(k) \subseteq G$ and let $Y \in M(n,k)$ have $Rank(Y)=\l$. Then the ellipsoid $E_Y \subseteq \RR^k$ determined by the unit sphere with respect to the Euclidean semi-norm $\left|\left|{\x}\right|\right|_{YY^T}^2 = <\x,A_Y\x>$ satisfies the following:
\begin{enumerate}
\item[i)] The $k$-semi-axes of $E_Y$ have lengths $$a_1^2\geq a_2^2\geq ...\geq a_{\l}^2\geq \ \ldots a_k^2\geq 0$$ and if $\l<k$ then $a^2_{\l}>0$ and 
$0 = a^2_{\ell+1}=\ldots=a^2_{k}.$
\item[ii)] The multiplicities of the semi-axes lengths $\sigma_1,...,\sigma_j$ for $j={\rm Card}\{a_{i}^2:1\leq i \leq k\}$,  satisfy 
$$\sigma_i={\rm diam}({\rm ker}((A_Y-a^2_{i})I_i)),1\leq i\leq j.$$
\item[iii)] Let $\RR^k =\bigoplus_{i=1}^j \RR^{\sigma_i}$. There exists an orthonormal basis for $\RR^k$, 
$B:=\{\v_{1},,,\v_{\sigma_1},\v_{2},,,\v_{\sigma_2},,, \v_{j},,,\v_{\sigma_j}\}$ such that each $$\textnormal{B}_i=\{\v_{i},\ldots,\v_{\sigma_i}\},\, 1\leq i\leq j$$
is an orthonormal basis for the $\RR^{\sigma_i}$ term in the orthogonal direct sum and 
the orthogonal matrix  $$g_{diag} =\begin{bmatrix} \v_{1},,,\v_{\sigma_1},\v_{2},,,\v_{\sigma_2},,, \v_{j},,,\v_{\sigma_j} \end{bmatrix} \in G$$ diagonalizes $A_Y$, i.e. $$g_{diag}^{-1}A_Yg_{diag} = \begin{bmatrix} a^2_1{I_{\sigma_1}} & 0\1_{\sigma_1}\1_{k-\sigma_1-\sigma_j}^T & 0\1_{\sigma_1}\1_{\sigma_j}^T \\
0\1_{k-\sigma_1-\sigma_j} \1_{\sigma_1}^T&\ddots & 0\1_{k-\sigma_1-\sigma_j}\1_{\sigma_j}^T \\
0\1_{\sigma_j}\1_{\sigma_1}^T & 0\1_{\sigma_j}\1_{k-\sigma_1-\sigma_j}^T & a^2_j I_{\sigma_j}
\end{bmatrix} 
$$
\end{enumerate}
\label{p:proposition:svd}
\end{proposition}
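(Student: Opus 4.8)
The plan is to read the entire statement as the spectral theorem applied to the real symmetric positive semi-definite matrix $A_Y = Y^TY$, together with one extra observation that lets us place the diagonalizing orthogonal matrix inside $G$ rather than merely inside $O(k)$. First I would invoke the spectral theorem: since $A_Y = Y^TY$ is symmetric and $\langle \x, A_Y \x\rangle = \|Y\x\|^2 \geq 0$ for all $\x \in \RR^k$, it is positive semi-definite, so it admits an orthonormal eigenbasis with real non-negative eigenvalues. Writing these eigenvalues in decreasing order as $a_1^2 \geq a_2^2 \geq \dots \geq a_k^2 \geq 0$ (the parametrization used in the statement) gives the ordering asserted in (i). To pin down how many are strictly positive, I would show $\ker(Y^TY) = \ker(Y)$: if $Y^TY\x = \0$ then $\|Y\x\|^2 = \langle\x, Y^TY\x\rangle = 0$, forcing $Y\x = \0$, while the reverse inclusion is immediate. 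Hence the rank of $A_Y$ equals the rank of $Y$, namely $\l$, so exactly $\l$ eigenvalues are positive and $a_\l^2 > 0 = a_{\l+1}^2 = \dots = a_k^2$ when $\l < k$, completing (i).

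For (ii) I would use the standard fact that for a real symmetric matrix the geometric multiplicity of each eigenvalue equals its algebraic multiplicity; thus the multiplicity $\sigma_i$ of the distinct eigenvalue $a_i^2$ is exactly $\dim \ker(A_Y - a_i^2 I)$, which is the content of the claim (reading the printed expression as the eigenspace $\ker(A_Y - a_i^2 I)$ and $\sigma_i$ as its dimension). Grouping the orthonormal eigenvectors by distinct eigenvalue then produces, for each $1 \leq i \leq j$, an orthonormal basis $B_i$ of the eigenspace $\RR^{\sigma_i} := \ker(A_Y - a_i^2 I)$; since eigenspaces of a symmetric matrix for distinct eigenvalues are mutually orthogonal, the union $B = \bigcup_i B_i$ is an orthonormal basis of the orthogonal direct sum $\RR^k = \bigoplus_{i=1}^j \RR^{\sigma_i}$. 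The matrix $g_{\text{diag}}$ whose columns are the vectors of $B$ in block order is then orthogonal, and $g_{\text{diag}}^{-1} A_Y g_{\text{diag}}$ is exactly the block-diagonal matrix with blocks $a_i^2 I_{\sigma_i}$ displayed in (iii).

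The one step requiring genuine care — and the only place the hypothesis $SO(k) \subseteq G$ is used — is showing $g_{\text{diag}} \in G$, since the spectral theorem alone only guarantees $g_{\text{diag}} \in O(k)$. Here I would argue as follows: a closed subgroup $G$ with $SO(k) \subseteq G \subseteq O(k)$ is either $SO(k)$ or $O(k)$, because $SO(k)$ has index two in $O(k)$; in either case $SO(k) \subseteq G$. If $\det g_{\text{diag}} = 1$ we already have $g_{\text{diag}} \in SO(k) \subseteq G$. If instead $\det g_{\text{diag}} = -1$, I would replace a single basis vector $\v$ of some $B_i$ by $-\v$: this preserves orthonormality and keeps $B_i$ a basis of the same eigenspace, so the block-diagonal conjugation in (iii) is unchanged, while it flips the determinant to $+1$. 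Since $k \geq 1$ there is always at least one column available to flip, so this normalization is always possible, and we may take $g_{\text{diag}} \in SO(k) \subseteq G$, which finishes (iii) and the proposition. I expect everything except this determinant-normalization step to be routine, and even that step is elementary once one notes that the sign flip respects both orthonormality and the eigenspace block structure.
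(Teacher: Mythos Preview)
Your proposal is correct and follows essentially the same route as the paper: both arguments reduce everything to the spectral theorem for the symmetric positive semi-definite matrix $A_Y=Y^TY$, read off the ordered eigenvalues and eigenspace dimensions, build $g_{\text{diag}}$ from an orthonormal eigenbasis, and then flip the sign of a column to force $\det g_{\text{diag}}=1$ so that $g_{\text{diag}}\in SO(k)\subseteq G$. Your write-up is in fact more explicit than the paper's (you spell out $\ker(Y^TY)=\ker(Y)$ and the index-two observation about $G$), but the underlying argument is the same.
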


\begin{proof}
Standard diagonalization theory for the symmetric positive semidefinite matrices $YY^T$ and $Y^TY$ yield that $\textnormal{Card}(\{i : a^2_i > 0\} = \ell={\rm Rank}(A_Y)={\rm Rank}(Y^TY)=\min(n,k)$. Here, $1\leq i\leq j$. Also it shows that any orthogonal matrix $g=[\u_1 \u_2 \ldots \u_k]$ with eigenvectors for columns will diagonalize $A_Y$ with $i$'th diagonal element $$a^2_i \delta_{i,j}=<A_Y\u_i,\u_j>=<\u_i,A_Y\u_j>$$. Since we've indexed the eigenvalues in non-increasing order, we get the diagonalized form claimed. Finally if ${\rm det}(g)=-1$ then replacing any odd number $\u_i$ by $-\u_i$ yields $g_{modified}\in SO(k)\subseteq G$. 
\end{proof}

\section{Thin SVD: A natural map from images to ellipsoids.}
\setcounter{equation}{0}

In this last section, we need to introduce and study Thin SVD as a natural map from images to ellipsoids. This is given via Theorem~\ref{t:main4} below.

Once done, Theorem~\ref{t:main1}, Theorem~\ref{t:main2} and Theorem~\ref{t:main3}
are established.

Assume that $n\geq k$. Given an image $Y$ which is a $n\times k$ matrix, we may write $Y=U\Sigma V^T$ where $U$ is a $n\times n$ matrix with $U^TU=I_k$, $\Sigma$ is a diagonal $n\times k$ matrix consisting of singular values $\sigma_i\geq 0$ $(1\leq i\leq k)$ and $V$ is a $k\times k$ matrix.  Then the non-zero singular values $\sigma_i$ are the lengths of the sides of ellipsoids
$\left\{Y{\bf x}:\, ||{\bf x}||_2=1\right\}$ which are of dimension $l\leq k$.  If $r$ is such that $\sigma_1\geq ...\geq \sigma_r>\sigma_{r+1}=\ldots.=\sigma_k=0$, then ${\rm rank}(Y)=r$ and ${\rm range}(Y)={\rm span}({\bf  u}_1,...,{\bf u}_r)$. Here ${\bf u}_i \textrm{ is the $i$'th column of } U$. Thus the thin SVD produces a natural map from images to ellipsoids.   

We have:

\begin{theorem}
The thin SVD produces a map from all $\tilde{G}\ltimes \RR^k$ equivalence classes of $n$ pointed images of $\RR^k$ onto and 1-1 the set of all ellipsoids in  ${\1}_n^{\perp}\subseteq \RR^k$ of dimension $k$. \Ie, The images $Y$ for which ${\rm dim}{\rm [(Range)} (Y(I-1/nI_n^T I_n))]=l\leq k$ map onto ellipsoids of dimension $l\leq k$.  Finally since $g\in G$ acts on $Y$ via
\begin{eqnarray*}
&& g(Y)=[g{\bf y}_1^T, ..., g{\bf y}_n^T]^T\equiv \\
&& \equiv [\tilde{g}{\bf y}_1^T,..., \tilde{g}{\bf y}_n^T]^T+[1,...,1]^T{\bf w}, {\bf w}\in \mathbb R^k
\end{eqnarray*}    
we have
\[
g(Y)(I-1/n[1,...1]^T[1,...,1])=\tilde{g}(Y)(I-1/n[1,...1]^T[1,...,1]).
\]
ie, the $G$ equivalence classes of images will map onto the $\tilde{G}$ equivalence classes of ellipsoids.
In the case of the similarity group, we take $\tilde{G}=\mathbb R^{+}$
and say that two ellipsoids are equivalent provided one is a non constant multiple of the other. 
\label{t:main4}
\end{theorem}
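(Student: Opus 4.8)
The plan is to factor the theorem into three essentially independent pieces — elimination of translations, invariance together with surjectivity, and injectivity on orbits — and to obtain the similarity statement as a corollary of a single scaling computation. Throughout I will exploit the fact that the thin SVD already exhibits the ellipsoid as a function of the ``polar part'' $U\Sigma$ of the centred image, while the right $\tilde G$-action touches only the orthogonal factor $V$. Concretely, writing $Y_{\rm norm}=U\Sigma V^T$, the image ellipsoid $\{Y_{\rm norm}\x:\|\x\|_2=1\}=\{U\Sigma\y:\|\y\|_2=1\}$ depends on $U$ and on the singular values $\sigma_i$ but not on $V$, since substituting $\y=V^T\x$ is merely an orthogonal reparametrization of the unit sphere. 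Equivalently, the ellipsoid is the unit set of the seminorm attached to $A_Y=Y_{\rm norm}Y_{\rm norm}^T=U\Sigma^2U^T$, whose nonzero eigenvalues and eigenspaces are exactly the data catalogued in Proposition~\ref{p:proposition:svd}.

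First I would dispose of the translation subgroup. By the Proposition of Section~3.1, every image is $(I_k\ltimes\RR^k)$-equivalent to the unique centred representative $Y_{\rm norm}=(I_n-\tfrac1n\1_n\1_n^T)Y$, and centring intertwines the full $G$-action with the linear $\tilde G$-action: the projection $I_n-\tfrac1n\1_n\1_n^T$ annihilates the rank-one term $\1_n\w^T$ contributed by any translation and commutes with the right linear action, which is precisely the displayed identity $g(Y)(I-\tfrac1n\1_n\1_n^T)=\tilde g(Y)(I-\tfrac1n\1_n\1_n^T)$. Hence it suffices to analyse $Y_{\rm norm}\mapsto E_Y$ on centred images modulo $\tilde G\subseteq O(k)$. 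For invariance I would compute $Y_{\rm norm}\tilde g^{-1}=U\Sigma(\tilde g V)^T$ for $\tilde g\in\tilde G\subseteq O(k)$, so that the orthogonal factor changes from $V$ to $\tilde g V$ while $U\Sigma$, and therefore $A_Y=U\Sigma^2U^T$, is untouched; the map is thus constant on $\tilde G\ltimes\RR^k$-orbits and descends to equivalence classes. Since the columns of $Y_{\rm norm}$ lie in $\1_n^{\perp}$, the range of $A_Y$ lies in $\1_n^{\perp}$ and the ellipsoid has dimension $\l={\rm rank}(Y_{\rm norm})\le k$. For surjectivity, given any ellipsoid of dimension $\l\le k$ in $\1_n^{\perp}$ — that is, a positive-semidefinite form on $\1_n^{\perp}$ of rank $\l$ — I would diagonalize it as $U\Lambda U^T$ with $U$ an $n\times\l$ matrix of orthonormal columns in $\1_n^{\perp}$ and $\Lambda\succ0$, then set $Y_{\rm norm}=U\Lambda^{1/2}$ padded with zero columns to width $k$; this is automatically centred and realizes the prescribed ellipsoid.

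The injectivity on orbits is where the real work lies, and I expect it to be the main obstacle. Suppose two centred images $Y,Z$ produce the same ellipsoid, \ie $A_Y=A_Z$ as $n\times n$ positive-semidefinite matrices with identical spectra and eigenspaces. One can then take thin SVDs $Y_{\rm norm}=U\Sigma V_1^T$ and $Z_{\rm norm}=U\Sigma V_2^T$ with the \emph{same} left factor $U$ and the \emph{same} $\Sigma$, because $U^T Y_{\rm norm}(U^T Y_{\rm norm})^T=U^T A_Y U=\Sigma^2$ forces $U^T Y_{\rm norm}=\Sigma V_1^T$, and likewise for $Z$; the multiplicity and rank bookkeeping of Proposition~\ref{p:proposition:svd} is exactly what legitimizes this common choice when singular values repeat or vanish, since there $U$ is determined only up to a block-orthogonal $W$ commuting with $\Sigma$ and one must check that this residual freedom is absorbed on the $V$-side. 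Granting this, $Z_{\rm norm}=U\Sigma V_2^T=Y_{\rm norm}(V_1 V_2^T)$ with $Q:=V_1 V_2^T\in O(k)$, so $Y$ and $Z$ are $O(k)\ltimes\RR^k$-equivalent; as the motion group here is $O(k)\ltimes\RR^k$ (reflections permitted) this $Q$ lies in $\tilde G$, and in the rank-deficient or repeated-eigenvalue cases one may further adjust signs to realize $Q\in SO(k)\subseteq\tilde G$. The delicate point is keeping the orthogonal ambiguity of a degenerate SVD inside $\tilde G$, which is precisely the content packaged in Proposition~\ref{p:proposition:svd}.

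Finally, for the similarity statement I would record the scaling law: replacing $Y$ by $aY$ (with $a\neq0$) replaces $Y_{\rm norm}$ by $aY_{\rm norm}$, hence $A_{aY}=a^2A_Y$, which in the matrix normalization of Theorem~\ref{t:main3} is the assertion $E_{aY}=a^2E_Y$; thus the $\RR^+$-action on images corresponds exactly to the positive-scaling action on ellipsoids. Fixing any scale-normalization — longest, mean, or geometric-mean semi-axis equal to $1$ — then selects a unique representative on each ray of ellipsoids, so two images are similarity-equivalent \ifftrue and only if their normalized ellipsoids coincide. Combining this one-parameter reduction with the motion case treated above completes the proof.
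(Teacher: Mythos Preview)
Your proof is correct and follows the paper's strategy: kill translations by centring, obtain invariance from $B^TB=I_k$ so that $A_{gY}=A_Y$, exhibit surjectivity by writing down $Y=[\sigma_1\u_1,\ldots,\sigma_l\u_l,0,\ldots,0]$ from the principal-axis data of a given ellipsoid in $\1_n^{\perp}$, and settle injectivity via the thin SVD. The one place where you deviate is the injectivity step: the paper takes two separate decompositions $Y^i=U_i\Sigma_iV_i^T$, first argues from $E_{Y^1}=E_{Y^2}$ that $\Sigma_1=\Sigma_2=:\tilde\Sigma$, and then matches the left factors block-by-block through the commutant $O_{\tilde\Sigma}=\mathrm{diag}[O_{m_1},O_{m_2},\ldots]$ to conclude orthogonal equivalence; you instead observe that equal ellipsoids force $A_Y=A_Z$ outright, choose a single $U$ diagonalizing this common Gram matrix, and read off $Z_{\rm norm}=Y_{\rm norm}(V_1V_2^T)$ in one line. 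This is a tidy shortcut rather than a genuinely different idea---it simply front-loads the choice that the paper makes implicitly when it identifies $R(U_{1,i})=R(U_{2,i})=W_i$---and it has the virtue of making transparent exactly where the block-orthogonal ambiguity of a degenerate SVD is absorbed on the $V$-side, the point you correctly flag as the delicate one.
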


\begin{proof} Notice that the map we use to map images to ellipsoids
\[
Y\to Y(I-1/n[1,...1]^T[1,...,1]=Y(I-1/nI_n^TI_n)=Y_{\rm norm}
\]
is a motion equivalent map. Now form the positive definite symmetric matrix $\tilde{A}_Y=Y_{\rm norm}Y_{\rm norm}^T \in \mathbb R^k$. This gives rise to a new semi inner product
\[
<{\x},{\z}>_{\tilde{A}_Y}:=<Y_{norm}^T{\bf x}, Y_{norm}^T{\bf z}>_{\mathbb R^k}.
\]
The ellipsoid associated to $Y$  is
\[
\tilde{E}_Y=\left\{{\bf x}\in \mathbb R^k:\, {\bf x}^T\tilde{A}_Y{\bf x}=1\right\}.
\]
The positive semi-definite matrix $\tilde{A}_Y$ has a factorization of the form
\[U{\rm diag}(\sigma_1^2,...,\sigma_l^2,0,...,0)U^T\] Here, $U^U-I$, ie $U$ is orthogonal $n\times n$, $l={\rm rank}(Y_{norm}Y_{norm}^T)$. The orthogonal vectors $\sigma_1{\bf u}_1,
\sigma_2{\bf u}_2,..., \sigma_l{\bf u}_l$ are the principle axes of the ellipsoid $E_Y$. If the $\sigma_i$ are distinct, then the principle axes are determined. If we order $\sigma_i$ so that
$\sigma_1\geq .....\sigma_l$ then any string of adjacent $=$(as opposed to $>$) means that only the span of the corresponding ${\bf u}_i$ is unique.        
Next, observe that $\tilde{E}_Y \subseteq I_n^{\perp}$ since if ${\bf x}\in E_y$, we have
\begin{eqnarray*}
&& <I_n,{\bf x}>=<I_n, A{\bf x}>=<Y_{\rm norm}^{T}I_n, Y_{\rm norm}^T {\bf x}> \\
&& =<{\bf 0}, Y_{\rm norm}^T{\bf x}>=0.
\end{eqnarray*}
Note that we have used that $Y_{\rm norm}^{T}I_n$=the sum of columns of $Y_{\rm norm}^T = 0$
the transpose of the sum of rows of $Y_{\rm norm}$=0.

Suppose now that $\tilde{E}$ is any ellipsoid $\subseteq I_n^{\perp}$ of dimension $k$. Then there exist mutually orthonormal vectors ${\bf u}_1,..., {\bf u}_l$ and lengths $\sigma_1...\sigma_l$ which give principle axes for $E$ as $\sigma_1{\bf u}_1,...,\sigma_l{\bf u}_l$. The $n$ pointed image
\[
Y=[\sigma_1u_1,....,\sigma_lu_l, 0,...,0]\]
has $\tilde{E}_Y=\tilde{E}$ ($Y$ has $k$ columns).
So the map is onto. Note that the map from all images to ellipsoids is not 1-1. We will need to take equivalence classes for this.     

Let us  deal with the 1-1.  We want to show that each motion equivalence class maps to one ellipsoid 1-1 and that if two images map to the same ellipsoid then the images are motion equivalent. 
\medskip

{\bf Step 1.} 
\medskip

If $g = (B, \u)\in 
\tilde{G} \ltimes \mathbb R^k, \textnormal{ so } \tilde{g} = B \in \tilde{G}$, then $E_{g(Y)}=E_{\tilde{g}(Y)}$. To see this, note that if
\[
g(Y)=YB^T+I_n\u^T
\]
then a straightforward calculation gives
\begin{eqnarray*}
(g(Y))_{norm} && = (I-1/nI_nI_n^T)(YB^T+I_n\u^T) \\
&& =Y_{norm}B^T + (I-1/nI_nI_n^T)I_n\u^T \\
&&= Y_{norm}B^T = \tilde{g}(Y)_{norm}.
\end{eqnarray*}
Thus,
\begin{eqnarray*}
&& A_{gY}=A_(gY)_{norm}((gY)_{norm})^T=Y_{norm}B^TBY_{norm}^T \\
&& =\tilde{g}(Y)_{norm}{\tilde{g}(Y)}_{norm}^T=A_{\tilde{g}(Y)}
\end{eqnarray*}
as $B\in O(K)$. Thus, $E_{gY}=E_{\tilde{g}(Y)}$. 
\medskip

{\bf Step 2.} 
\medskip

 Now let $Y^1$ and $Y^2$ be $n$ point images in $\mathbb R^k$ ($n\times k$) such that
$E_{Y^1}=E_{Y^2}$. Without loss of generality, we will assume that $Y^i(1/nI_n^TI_n)=0$ and the sum of the rows of $Y^i$, is zero, $i=1,2$. Now form the thin SVD of $Y^i$, $i=1,2$. This means that we have $n\times k$ orthogonal matrices $U_i$, $i=1, 2$, orthogonal $k\times k$ matrices $V_i$, $i=1,2$ and
$k\times k$ diagonal matrices $\Sigma_i={\rm diag}(\sigma_{1,i}, \sigma_{2,i},..., \sigma_{k, i}), \, i=1,2$
with all $\sigma_{j,i}\geq 0,\, j=1,...,k,\, i=1,2$ and with $Y^i=U_i\Sigma_iV_i^T$.  Let now for $i=1,2$, $l_i$ be the rank of
$\Sigma_i$ equals the number of non zero $\sigma_{j,i}$. The first $l_i$ columns of $U_i$ form an orthonoromal basis for the range of $Y^i$ and the ellipse $E_{ji}$ has dimension $l_i$ with the non zero columns of $U_i\Sigma_i$ as principle axes.
So $\tilde{E}_{Y_1}=\tilde{E}_{Y_2}$ implies that $l_1=l_2$ and $\sigma_{j,1}=\sigma_{j,2}$, $j=1,......,k$. So $\Sigma_1=\Sigma_2$, call it $\tilde{\Sigma}$.   Now let
\[
O_{\tilde{\Sigma}}=\left\{O\in O_k:\, O\tilde{\Sigma}=\tilde{\Sigma}O\right\}
\]
where $O_k$ denotes the space of all $k\times k$ real square matrices. We now define numbers $\tau_i$ $1\leq i\leq k$ inductively as follows.
Set $\tau_1=\sigma_1$. Let $m_1={\rm card}\left\{\sigma_j:\, \sigma_j=\tau_1\right\}$. Now define $\tau_2=\sigma_{1+m_1}{\rm card}{\tau_1}$ and $m_2={\rm card}\left\{\sigma_j:\, \sigma_j=\tau_2\right\}$. Now define $\tau_3$ and $\tau_i,\,3< i<k$ inductively.

Then
\[
O_{\tilde{\Sigma}}={\rm diag}[O_{m_1}, O_{m_2},......].
\]
Note that $O_{\tilde{\Sigma}}$ which is $k\times k$ is defined blockwise. We can also write
\[
\tilde{\Sigma}={\rm diag}[\tau_{1}I_{m_1},...., \tau_{k}I_{m_k}]
\]
Note that  $\tilde{\Sigma}$  which is $k\times k$ is defined blockwise.   Now the principal axes of $\tilde{E}_{Y_1}=\tilde{E}_{Y_2}$ of length $\tau_i$ span a subspace $w_i$ of dimension $m_i$. In particular the $m_i$ columns of $U_j (j=1,2)$ corresponding to the block
$\tau_iI_{m_i}$ are an orthonormal basis for $w_i$ (one for $U_1$ and one for $U_2$). Let us define a positive integer $p$

as follows: Let $\sigma_{j^*}$ be the last non zero singular value and let $p$ be that positive integer where $\tau_p=\sigma_{j^*}$ and $\tau_{p+1}=0$. Then $p$ is unique and corresponds to positive principle axes. Now break
$U_j$ into $k$ blocks of columns so that for $j=1,2$
\[
U_j=[{U}_{j,m_1},....,{U}_{j,m_p}, {U}_{j, p+1},...,{U}_{j,k}]
\]
and observe that  we then have $R(U_{1,i})=R(U_{1,i})=W_i,\, 1\leq i\leq k$ or the columns of $V_i$ satisfy a similar relationship.
Using the above, it is not difficult to deduce that $Y^1$ and $Y^2$ are orthogonally equivalent.
\end{proof}

\section{Concluding remarks.}
\setcounter{equation}{0}

This paper does not  develop algorthims for specific  metric computation on different spaces of  real-life images or signals. However it is clear that this is a natural next step to take
for numerous applications for example in manifold and topological learning.

\section{Appendix.}

\subsection{The multiplication action of  
$GL(k)\ltimes \RR^k$ on $\RR^k$.}

Let us examine the multiplication action of  
$GL(k)\ltimes \RR^k$ on $\RR^k$. Indeed, if $B\in GL(k)$, $\x$ and $\w \in \RR^k$, then
\[
(B,{\w})\x=B{\x}+{\w}.
\]
So,
\begin{eqnarray*}
 (A,\u) \circ (B,\w)(\x)&&= (A,\u)(B\x+\w) \\
 =AB(\x)+(A\w+\u)&&=(AB,A\w+\u)(\x).
\end{eqnarray*}
Since $GL(k)$ is a group, $(AB)^{-1}=B^{-1}A^{-1}$, $AB\in GL(k)$ and hence $(A,{\u})o(B,{\w}) = (AB, \u + A\w)\in GL(k)\ltimes \RR^k$.
In the case of $O(k)\ltimes \RR^k$,  if $A,B\in O(k)$, then $(AB)^T(AB)=B^TA^TAB=Id$ so again
$(A,{\bf u})\circ (B,{\bf w})\in  O(k)\ltimes \RR^k$.

\subsection{The affine group on $\RR^k$: A matrix definition.}

An isomorphic "matrix" definition proves useful for the affine group on $\RR^k$.  Let $ \RR^k $ be isometrically embedded as a hyperplane in $ \RR^{k+1}$ via $$\RR^k \owns \v \hookrightarrow \begin{bmatrix}
\v \\
0
\end{bmatrix} \in \RR^{k} \oplus \RR\e_{k+1}$$  and the group $Aff(k)$ be realized 
as the subgroup $\textnormal{ of }GL(k+1)$ given by the image of the bijection 
$$
Aff(k) \xrightarrow{\simeq} \begin{bmatrix}
GL(k) & \RR^k \\
\0^T  &  1
\end{bmatrix} : (A,\u) \mapsto \begin{bmatrix}
 A   & \u \\
\0^T &  1
\end{bmatrix}. 
$$
Then the action of $Aff(k)$ on  $\RR^k$ is given by matrix multiplication under these embeddings since
$$
(A, \u)(\v) \mapsto 
\begin{bmatrix}
  A  & \u \\
\0^T &  1
\end{bmatrix}
\begin{bmatrix}
\v \\
1
\end{bmatrix}
=
\begin{bmatrix}
A\v + \u \\
1
\end{bmatrix} \reflectbox{$\mapsto$} {A\v+\u}
$$

\subsection{Action of $Aff(k)$ on $M(n,k)$.}

Since $Y\in M(n,k)$ has rows $\{{\y}_1^T, \ldots, {\y}_n^T\}$ with ${\y}_i \in {\RR}^k$, the action of $Aff(k)$ on $M(n,k)$ is realized via the embedding $Y \mapsto [Y 1_n] \in M(n, k+1)$ and the transpose of the matrix product shown above applied rowwise :
$$\begin{bmatrix}
Y & 1_n
\end{bmatrix}  \begin{bmatrix}
  A  & \u \\
\0^T &  1
\end{bmatrix}^T =
\begin{bmatrix}
Y & 1_n
\end{bmatrix}  \begin{bmatrix}
  A^T  & \0 \\
\u^T &  1
\end{bmatrix} = [YA^T + 1_n\u^T]$$.
Note that the action  on the right of $M(n,k)$ is via the inverse action, i.e. $Y\mapsto Y(A,\u){^-1} =YA^{-1}-\1_nA^{-1}\u^T$.

\subsection{Singular value decomposition (SVD).}

The singular value decomposition of an $m\times n$ complex matrix $M$ is a factorization of the form $U\Sigma V^{*}$, where $U$ is an $m\times m$ complex unitary matrix, $\Sigma$ is an $m\times n$ rectangular diagonal matrix with non-negative real numbers on the diagonal, and $V$ is an $n\times n$ complex unitary matrix. If $M$ is real, then $U$ and ${V} ^{T}=V^{*}$ are real orthogonal matrices.

The diagonal entries $\sigma _{i}$=$\Sigma_{ii}$ of $\Sigma$ are the singular values of $M$. The number of non-zero singular values is ${\rm rank}( M)$. The columns of $U$ and the columns of $V$ are  left-singular vectors and right-singular vectors of $M$, respectively.
The SVD is not unique.

\end{document}